\newtheorem{lemma}{Lemma}[section]
\newtheorem*{remark}{Remark}
\newtheorem{theorem}{Theorem}
\DeclarePairedDelimiter\ceil{\lceil}{\rceil}
\title{Efficient Generation of Binary Magic Squares}
\author{Alain Riou\\
	Sony Computer Science Laboratories -- Paris\\
	LTCI, Télécom Paris, Institut Polytechnique de Paris\\
	\texttt{alain.riou14000@yahoo.com}
}
\acrodef{BMS}{Binary Magic Square}
\newcommand{\m}{\{0, \dots, m-1\}}
\newcommand{\n}{\{0, \dots, n-1\}}
\begin{document}

\maketitle

\begin{abstract}
	We propose a simple algorithm for generating Binary Magic Squares (BMS), i.e., square binary matrices where the sum of all rows and all columns are equal.
	We show by induction that our algorithm always returns valid BMS with optimal theoretical complexity.
	We then extend our study to non-square Binary Magic Squares, formalize conditions on the sum of rows and columns for these BMS to exist, and show that a slight variant of our first algorithm can generate provably generate them.
	Finally, we publicly release two implementations of our algorithm as Python packages, including one that can generate several BMS in parallel using GPU acceleration.
\end{abstract}

\section{Introduction}

\begin{wrapfigure}{r}{0.3\textwidth}
	\centering
	\includegraphics[width=0.9\linewidth]{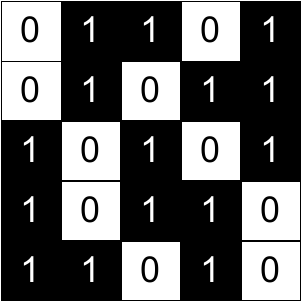}
	\caption{A Binary Magic Square with $n = 5$ and $k = 3$.}
	\label{fig:example}
	\vspace*{-8mm}
\end{wrapfigure}

Binary Magic Squares (BMS) are random binary matrices $M = (m_{ij})_{i,j} \in \{0, 1\}^{n \times n}$ such that the sum of all rows and columns is equal to the same constant $k$, i.e.
\begin{equation}
	\forall i \in \n, \ \sum_{j=0}^{n-1} m_{ij} = \sum_{j=0}^{n-1} m_{ji} = k.
\end{equation}
An example is provided on Figure~\ref{fig:example}.

BMS are strongly related to graph theory as they represent the adjacency matrix of regular bipartite graphs, leading to several applications, notably in combinatorial optimization~\cite{Chen1997}.


Several works have theoretically studied the existence and computability of binary matrices with constraints on the sum of rows and columns, among which BMS are a special case~\cite{Ryser1957,Nam1999,Alpers2017}.
A few works proposed to exhaustively generate all combinations of $k$-regular graphs~\cite{Meringer1999}. Markov chain Monte Carlo algorithms for sampling binary matrices with fixed margins, typically relying on column swapping, have also been proposed~\cite{Wang2019,Fout2020}.


Here, we propose an efficient algorithm for generating BMS. We demonstrate that it always returns valid BMS with a complexity of $\mathcal{O}(n^2)$, and we then generalize it to non-square BMS.
We release implementations of our algorithm as a Python pip-installable package.\footnote{\url{https://github.com/aRI0U/binary-magic-squares}}

\section{Preliminaries}

Before detailing our algorithm, we first observe that Binary Magic Squares always exist for any values of $n$ and $k$.

\begin{theorem}
	\label{th:existence}
	For all $n \in \mathbb{N}$, for all $k \in \{0, \dots, n\}$, there exists $M \in \{0, 1\}^{n \times n}$ such that $M$ is a valid BMS whose sum of rows and columns equals $k$.
\end{theorem}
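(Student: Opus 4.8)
The plan is to prove existence by exhibiting an explicit matrix rather than arguing abstractly. For fixed $n$ and $k \in \{0,\dots,n\}$, define $M = (m_{ij})_{i,j}$ by
\[
	m_{ij} = 1 \iff (j - i) \bmod n \in \{0, 1, \dots, k-1\},
\]
and $m_{ij} = 0$ otherwise; geometrically, this places the ones of each row in a cyclically contiguous block of length $k$ that shifts by one position as we move down the rows, i.e.\ $M$ is a circulant matrix. The two boundary cases fall out immediately: $k = 0$ gives the zero matrix and $k = n$ gives the all-ones matrix, both of which are trivially valid BMS.

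First I would check the row sums. Fix $i \in \{0, \dots, n-1\}$. As $j$ ranges over $\{0, \dots, n-1\}$, the map $j \mapsto (j - i) \bmod n$ is a bijection onto $\{0, \dots, n-1\}$, so exactly $k$ values of $j$ satisfy $(j-i) \bmod n < k$; hence $\sum_{j} m_{ij} = k$. Next the column sums: fix $j$ and let $i$ range. The map $i \mapsto (j - i) \bmod n$ is again a bijection onto $\{0, \dots, n-1\}$, since it is the composition of $i \mapsto -i$ with a translation, both of which are permutations of $\mathbb{Z}/n\mathbb{Z}$. Therefore exactly $k$ values of $i$ give $(j - i) \bmod n < k$, so $\sum_{i} m_{ij} = k$. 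All row and column sums equal $k$, so $M$ is a valid BMS and the claim follows.

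There is essentially no hard step: the only point requiring care is that both $j \mapsto (j-i) \bmod n$ and $i \mapsto (j-i) \bmod n$ are genuine permutations of the index set, which is precisely what makes the counting exact. An alternative route, closer to the inductive flavour announced in the abstract, would be induction on $k$: the zero matrix handles $k = 0$, and passing from $k$ to $k+1$ amounts to adding a permutation matrix whose support is disjoint from the current one, which exists whenever $k < n$ by a Hall-type matching argument applied to the bipartite complement of the current support. I would nonetheless prefer the circulant construction above, as it is fully explicit and avoids invoking matching theorems.
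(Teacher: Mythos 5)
Your construction is exactly the same circulant matrix as the paper's (the paper's condition $i \leq j < i+k$ or $i \leq j+n < i+k$ is precisely $(j-i) \bmod n \in \{0,\dots,k-1\}$), and your verification via the bijectivity of $j \mapsto (j-i) \bmod n$ and $i \mapsto (j-i) \bmod n$ is correct — arguably cleaner than the paper's explicit interval counting with $\min$ and $\max$. This is essentially the same proof.
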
 
\begin{proof}
	Let $n \in \mathbb{N}$ and $k \in \{0, \dots, n\}$. Consider the matrix $M$ whose coefficients $m_{ij}$ are defined by
	\begin{equation}
		m_{ij} =
		\begin{cases}
			1 & \text{if} \ i \leq j < i + k \ \text{or} \ i \leq j + n < i + k \\
			0 &\text{otherwise}
		\end{cases}.
	\end{equation}
	
	For all $i \in \n$,
	\begin{equation}
		\begin{aligned}
			\sum_{j=0}^{n-1} m_{ij}
			&= \sum_{j=i}^{\min(i+k-1, n-1)} m_{ij} + \sum_{j=0}^{i + k - n - 1} m_{ij} \\
			&= \min(i+k, n) - i + \max(i + k - n, 0) \\
			&= k.
		\end{aligned}
	\end{equation}
	
	Similarly, for all $j \in \n$, 
	\begin{equation}
		\begin{aligned}
			\sum_{i=0}^{n-1} m_{ij}
			&= \sum_{i=\max(j-k+1, 0)}^{j} m_{ij} + \sum_{i=j+n-k+1}^{n-1} m_{ij} \\
			&= \min(j, k) + \max(k - j, 0) \\
			&= k.
		\end{aligned}
	\end{equation}
	
	By definition, $M$ is therefore a Binary Magic Square.
\end{proof}

This result, which is a special case of the Gale-Ryser theorem~\cite{Gale1957,Ryser1957}, provides us the guarantee that our algorithm can always return a valid BMS whatever the input arguments.

\section{Efficient Generation of BMS}

\subsection{Intuition}

The idea is to compute the BMS column by column. To do so, we start from a matrix $M \in \{0, 1\}^{n \times n}$ full of zeros and then, we successively pick $k$ indices $(i_0, \dots, i_{k-1})$ per column $t$ and put a 1 in the corresponding cells $(m_{i_l,t})_{0 \leq l < k}$, while perserving the following constraints on the sum over the rows through time:
\begin{equation}
	\label{eq:constraint}
	\forall i \in \n, \forall t \in \n, \ k + t - n \leq \sum_{j=0}^{t} m_{ij} \leq k.
\end{equation}
If Equation \eqref{eq:constraint} holds, in particular for $t = n-1$ the sum of each line $i$ is $\sum_{j=0}^{n-1} m_{ij} = k$.
Moreover, since we pick exactly $k$ indices per column, the sum of each column is also $k$ by construction, so $M$ is actually a Binary Magic Square.

\begin{figure}
	\centering
	\includegraphics[width=0.55\textwidth]{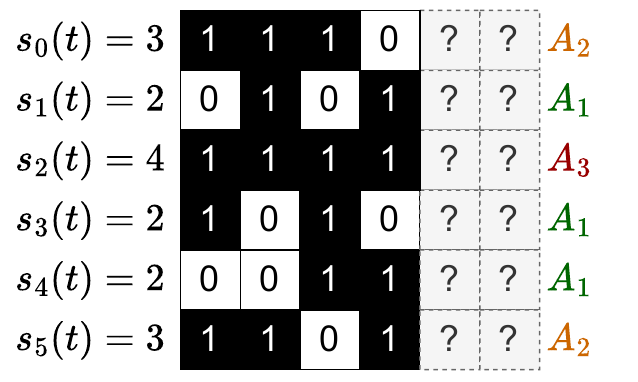}
	\label{fig:schema}
	\caption{Illustration of our algorithm for a BMS of size $6 \times 6$ with $k = 4$. At the end of step $t$, we compute the current sum of each row. Then, indices are partitioned based on their value. At step $t+1$, indices in $A_1$ must be selected to create a valid BMS, while indices from $A_3$ must not. Finally, some indices in $A_2$ are randomly selected (only one in this example) to ensure that the sum of each column equals $k$.}
\end{figure}

\pagebreak

\subsection{Algorithm}

	We now detail how to pick the indices at each step so that \eqref{eq:constraint} is satisfied at each time step.
	The idea, illustrated in Figure~\ref{fig:schema}, is that at each step $t$ we partition the candidate indices into three subsets $A_1$, $A_2$ and $A_3$ depending on whether the sum of the corresponding line is equal to $k + t - n$, equal to $k$ or strictly in between, and pick the right indices accordingly.

	\begin{algorithm}
		\label{algo:bms}
		\caption{Binary Magic Square generation}
		\SetKwInOut{Input}{input}
		\SetKwInOut{Output}{output}
		\Input{An integer $n \in \mathbb{N}$, an integer $k \in \{0, \dots, n\}$.}
		\Output{A Binary Magic Square $M \in \{0, 1\}^{n \times n}$ whose sum of rows and columns equals $k$.}
		$M = (m_{ij})_{i,j} = \mathbf{0}_{n \times n}$\;
		\For{$i \in \n$}{$s_i = 0$}
		\For{$t=0$ \KwTo $n-1$}{
			$A_1 := \{ i \in \n \,|\, s_i = k + t - n \}$\;
			$A_2 := \{ i \in \n \,|\, k + t - n < s_i < k \}$\;
			$A_3 := \{ i \in \n \,|\, s_i = k \}$\;
			$E := A_1 \cup \textbf{random\_subset}(A_2, k - |A_1|)$\;
			\For{$i \in E$}{
				$m_{it} := 1$\;
				$s_i := s_i + 1$\;
			}
		}
		\Return{$M$}
	\end{algorithm}

	Here, $E$ is the set of $k$ indices that is picked at each time step. We can easily prove that at the end of each iteration $t$, we have
	\begin{equation}
		s_i = \sum_{j=0}^t m_{ij}.
	\end{equation}
	
\section{Correction and complexity}

	\subsection{Correction}

	Here, we demonstrate that Algorithm~\ref{algo:bms} returns valid BMS, first by checking the sum of rows remains equal, then the sum of the columns.

	For each variable $x$ in Algorithm \autoref{algo:bms}, define $x(t)$ its $t$-th value in the algorithm.\footnote{Note that since $s_i$ are defined before the \textbf{for} loop the sequence $(s_i(t))_t$ is defined up to $t = n$ whereas other variables are defined only up to $t = n-1$.}
	In particular, the value of $s_i$ may increase only 1 by 1, i.e. for all $t$
	\begin{equation}
		\label{eq:one}
		s_i(t) \leq s_i(t+1) \leq s_i(t) + 1.
	\end{equation}
	
	\begin{lemma}
		\label{th:row}
		The sum of each row of a matrix generated by Algorithm~\ref{algo:bms} equals $k$.
	\end{lemma}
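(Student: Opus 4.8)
The plan is to establish, by induction on $t \in \{0, \dots, n\}$, the loop invariant already announced in Equation~\eqref{eq:constraint}: after the $t$-th iteration one has $k + t - n \le s_i(t) \le k$ for every $i \in \n$. I would carry along, as a second part of the induction hypothesis, the bookkeeping identity $\sum_{i \in \n} s_i(t) = tk$, which holds because at each completed iteration exactly $|E| = k$ of the $s_i$ are incremented. Once the invariant is proved, instantiating it at $t = n$ forces $k \le s_i(n) \le k$, hence $s_i(n) = k$; since $s_i(n) = \sum_{j=0}^{n-1} m_{ij}$, this is precisely the statement of the lemma.

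The base case $t = 0$ is immediate: every $s_i$ is initialised to $0$, and $k - n \le 0 \le k$ since $0 \le k \le n$, while the sum identity reads $0 = 0$.

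For the inductive step, suppose the invariant holds at time $t < n$. First, the inequalities $k + t - n \le s_i(t) \le k$ guarantee that $A_1(t)$, $A_2(t)$, $A_3(t)$ genuinely partition $\n$. The crux of the argument — and the step I expect to require the most care — is to check that the call $\textbf{random\_subset}(A_2(t), k - |A_1(t)|)$ is well-posed, i.e. that $0 \le k - |A_1(t)| \le |A_2(t)|$. Both bounds come from a counting argument against $\sum_i s_i(t) = tk$: since each $i \in A_1(t)$ contributes exactly $k + t - n$ and every other row at most $k$, rearranging and dividing by $n - t \ge 1$ gives $|A_1(t)| \le k$; symmetrically, bounding the contribution of the rows outside $A_3(t)$ from below by $k + t - n$ yields $|A_3(t)| \le n - k$, i.e. $|A_1(t)| + |A_2(t)| \ge k$. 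Hence $E(t)$ is well-defined with $|E(t)| = k$, it contains $A_1(t)$, and it is disjoint from $A_3(t)$.

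It then remains to propagate the invariant from $t$ to $t+1$ using $s_i(t+1) = s_i(t) + \mathbf{1}[i \in E(t)]$. The sum identity is immediate: $\sum_i s_i(t+1) = tk + |E(t)| = (t+1)k$. For the upper bound $s_i(t+1) \le k$, the only nontrivial case is $i \in E(t)$, where $i \notin A_3(t)$ forces $s_i(t) \le k-1$, so one more increment is harmless; for $i \notin E(t)$ the value is unchanged. For the lower bound $s_i(t+1) \ge k + (t+1) - n$, if $i \in E(t)$ it follows from $s_i(t) \ge k + t - n$ together with Equation~\eqref{eq:one}, and if $i \notin E(t)$ then $i \notin A_1(t)$, so the invariant's lower bound is in fact strict, $s_i(t) \ge k + t - n + 1$. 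This closes the induction and hence proves the lemma.
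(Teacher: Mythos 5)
Your proof is correct and its core is the same as the paper's: an induction on $t$ establishing the invariant of Equation~\eqref{eq:constraint}, then instantiating it at $t = n$. The one structural difference is that you additionally carry the identity $\sum_i s_i(t) = tk$ and use it to prove that $\textbf{random\_subset}(A_2(t), k - |A_1(t)|)$ is well-posed, i.e.\ $|A_1(t)| \le k$ and $|A_1(t)| + |A_2(t)| \ge k$. The paper does not do this inside Lemma~\ref{th:row}; it defers exactly that counting argument to the proof of Lemma~\ref{th:col}, because the row invariant only needs $A_1(t) \subseteq E(t) \subseteq A_1(t) \cup A_2(t)$, which holds by construction regardless of whether the requested subset size is feasible. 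Your version is therefore somewhat more than is strictly needed here, but it is self-contained, guards against the algorithm being undefined mid-run, and in effect already contains the key step of the column lemma; the paper's split keeps each lemma minimal at the cost of leaving the well-posedness of $E(t)$ unaddressed until later. Both your case analysis ($i \in E(t)$ versus $i \notin E(t)$) and the paper's (by membership in $A_1$, $A_2$, $A_3$) close the induction correctly.
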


	\begin{proof}
	We show by induction that Equation \eqref{eq:constraint} is verified at each iteration $t$.

	\begin{itemize}
		\item $\forall i \in \n, s_i(0) = 0$ so equation \eqref{eq:constraint} holds for $t = 0$.
		
		\item Assume equation \eqref{eq:constraint} holds for one $t \in \{0, \dots, n-1 \}$. We show that it holds as well for $t+1$.
		
		By hypothesis, for all $i \in \n$, $k + t - n \leq s_i(t) \leq k$, so $(A_1(t), A_2(t), A_3(t))$ is a partition of $\n$.
		
		Then, for all $i \in \n$,
		\begin{itemize}
			\item $i \in A_1(t)$
			$$\begin{aligned}
				i \in A_1(t)
				&\Rightarrow s_i(t) = k + t - n \ \text{and} \ i \in E(t) \\
				&\Rightarrow s_i(t+1) = s_i(t) + 1 = k + t - n + 1
			\end{aligned}$$
			\item $i \in A_2(t)$
			$$\begin{aligned}
				i \in A_2(t)
				&\Rightarrow k + t - n + 1 \leq s_i(t) \leq k - 1 \\
				&\Rightarrow k + t - n + 1 \leq s_i(t + 1) \leq k &\text{by \eqref{eq:one}}
			\end{aligned}$$
			\item $i \in A_3(t)$
			$$\begin{aligned}
				i \in A_3(t)
				&\Rightarrow s_i(t) = k \ \text{and} \ i \notin E(t) \\
				&\Rightarrow s_i(t+1) = s_i(t) = k
			\end{aligned}$$
		\end{itemize}
		So for all $i \in \n$, $k + t - n + 1 \leq s_i(t+1) \leq k$.
		\item By induction,
		\begin{equation}
			\forall t \in \n, \forall i \in \n, k + t - n \leq s_i(t) \leq k.
		\end{equation}
	\end{itemize}
	
	In particular, for all $i \in \n$, $s_i(n) = k$, i.e. the sum of each row equals $k$.
	\end{proof}
	
	\begin{lemma}
		\label{th:col}
		The sum of each column of a matrix generated by Algorithm~\ref{algo:bms} equals $k$.
	\end{lemma}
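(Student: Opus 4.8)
The plan is to reduce the column-sum property to the single claim that the index set $E(t)$ chosen at each step satisfies $|E(t)| = k$. Indeed, in Algorithm~\ref{algo:bms} the entry $m_{it}$ is set to $1$ exactly for $i \in E(t)$ and is never modified afterwards, so column $t$ of $M$ contains exactly $|E(t)|$ ones. Since $E(t) = A_1(t) \cup \textbf{random\_subset}(A_2(t), k - |A_1(t)|)$ with $A_1(t)$ and $A_2(t)$ disjoint, the equality $|E(t)| = k$ will follow from the two cardinality bounds $|A_1(t)| \le k$ and $|A_1(t)| + |A_2(t)| \ge k$: the first makes the requested subset size $k - |A_1(t)|$ non-negative, the second makes $A_2(t)$ large enough to actually contain such a subset, and then $|E(t)| = |A_1(t)| + (k - |A_1(t)|) = k$.

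To establish those two bounds I would thread a counting invariant through the argument, namely $\sum_{i \in \n} s_i(t) = kt$ for each $t$ (``exactly $k$ ones have been placed in each of the $t$ already-processed columns''), and prove it \emph{jointly} with $|E(t)| = k$ by induction on $t$. The base case $\sum_i s_i(0) = 0$ is immediate. For the inductive step, assume $\sum_i s_i(t) = kt$; by Equation~\eqref{eq:constraint}, established in the proof of Lemma~\ref{th:row}, we have $k + t - n \le s_i(t) \le k$ for every $i \in \n$, so $(A_1(t), A_2(t), A_3(t))$ is a partition of $\n$. Splitting $\sum_i s_i(t) = kt$ along this partition and using $s_i(t) = k + t - n$ on $A_1(t)$ together with $s_i(t) \le k$ off $A_1(t)$ yields $|A_1(t)|\,(n - t) \le k\,(n - t)$, hence $|A_1(t)| \le k$ since $t \le n-1$; using instead $s_i(t) = k$ on $A_3(t)$ together with $s_i(t) \ge k + t - n$ off $A_3(t)$ yields $|A_3(t)|\,(n - t) \le (n-k)\,(n-t)$, hence $|A_3(t)| \le n - k$, i.e. $|A_1(t)| + |A_2(t)| = n - |A_3(t)| \ge k$. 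This gives $|E(t)| = k$, and since during iteration $t$ each $s_i$ increases by $1$ when $i \in E(t)$ and is unchanged otherwise, $\sum_i s_i(t+1) = kt + |E(t)| = k(t+1)$, which closes the induction. Applying $|E(t)| = k$ for every $t \in \n$ then shows that the sum of each column of $M$ equals $k$.

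The main obstacle is that $|E(t)| = k$ is not a local property of step $t$: whether $A_1(t)$ and $A_2(t)$ have the right sizes depends on how the ones accumulated over all previous steps are distributed among the rows, so the identity $\sum_i s_i(t) = kt$ cannot be cited as an independent fact but must be carried inside the same induction that yields $|E(t)| = k$. Once that invariant and the row-sum constraint~\eqref{eq:constraint} are in hand, the two cardinality inequalities are a one-line counting computation; the only point requiring care is verifying that these inequalities are precisely the conditions under which the call to \textbf{random\_subset} is well-posed.
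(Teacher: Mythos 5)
Your proof is correct and follows essentially the same route as the paper's: both reduce the claim to $|E(t)| = k$ and establish it by double-counting the ones placed so far ($\sum_{i} s_i(t) = kt$) together with the partition into $A_1(t), A_2(t), A_3(t)$ to bound $|A_1(t)|$ and $|A_3(t)|$. The only differences are cosmetic --- you carry the counting identity inside the induction rather than deriving it from $|E(j)| = k$ for $j < t$, and your non-strict bounds $|A_1(t)| \le k$ and $|A_3(t)| \le n - k$ suffice for the call to $\textbf{random\_subset}$ to be well-posed while, unlike the paper's strict inequalities obtained from \eqref{eq:flank}, remaining valid even when $A_2(t) = \emptyset$.
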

	
	\begin{proof}
	For each row $i$ and column $t$, $m_{it} = 1$ if $i \in E(t)$ and 0 otherwise, by design. Therefore for all $t$
	\begin{equation}
		\sum_{i=0}^{n-1} m_{it} = |E(t)|.
	\end{equation}
	
	We show by induction that for all $t \in \n$, $|E(t)| = k$.
	\begin{itemize}
		\item Assume $t = 0$.
		\begin{itemize}
			\item If $k = 0$, then $A_1(0) = A_2(0) = \emptyset$ so $E(0) = \emptyset$ and $|E(0)| = 0 = k$.
			\item If $k = n$, then $k + t - n = 0$, i.e. $A_1(0) = \n$ and $A_2(0) = A_3(0) = \emptyset$.
			
			Then $E(0) = A_1(0) = \n$ and $|E(0)| = n = k$.
			\item If $0 < k < n$, then $A_1(0) = \emptyset$ and $A_2(0) = \n$.
			
			Then, $E(0)$ is a subset of $A_2(0)$ of size $k - A_1(0)$, i.e. a subset of $\n$ of size $k$. So $|E(0)| = k$.
		\end{itemize}
		
		\item Assume that exists $0 < t \leq n-1$ such that for all $0 \leq j < t$, $|E(j)| = k$. We show that $|E(t)| = k$ as well.
		
		At time step $t$, the total number of ones in $M$ is then $\sum_{j=0}^{t-1} |E(j)| = tk$ (i.e. the sum of the ones in each column).
		
		Also, the total number of ones in $M$ can be expressed as the sum of the number of ones in each line, i.e. $\sum_{i=0}^{n-1} s_i(t)$.
		
		As $A_1(t)$, $A_2(t)$ and $A_3(t)$ partition $\n$, we have therefore
		\begin{equation}
			\label{eq:linecol}
			\begin{aligned}
				tk
				&= \sum_{i \in A_1(t)} s_i(t) + \sum_{i \in A_2(t)} s_i(t) + \sum_{i \in A_3(t)} s_i(t) \\
				&= |A_1(t)|(k + t - n) + \sum_{i \in A_2(t)} s_i(t) + |A_3(t)| k
				&\text{by definition of } A_1 \text{ and } A_3
			\end{aligned}
		\end{equation}
	
		By definition of $A_2$,
		\begin{equation}
			\label{eq:flank}
			|A_2(t)| (k + t - n) < \sum_{i \in A_2(t)} s_i(t) < |A_2(t)| k.
		\end{equation}
	
		So, by injecting \eqref{eq:flank} into \eqref{eq:linecol},
		\begin{equation}
			(|A_1(t)| + |A_2(t)|)(k + t - n) + |A_3(t)| k < tk < |A_1(t)|(k + t - n) + (|A_2(t)| + |A_3(t)|) k.
		\end{equation}
	
		As $A_1(t)$, $A_2(t)$ and $A_3(t)$ partition $\n$,
		\begin{equation}
			|A_1(t)| + |A_2(t)| + |A_3(t)| = n.
		\end{equation}
		Therefore
		$$\begin{aligned}
			&(n - |A_3(t)|)(k + t - n) + |A_3(t)| k < tk \\
			&nk + nt - n^2 - |A_3(t)|k - |A_3(t)|t + |A_3(t)|n + |A_3(t)|k < tk \\
			&|A_3(t)|(n - t) < n^2 - nk - nt + tk \\
			&|A_3(t)|(n - t) < (n - k)(n - t) \\
			&|A_3(t)| < n - k
		\end{aligned}$$
		and
		$$\begin{aligned}
			&tk < |A_1(t)|(k + t - n) + (n - |A_1(t)|) k \\
			&tk < |A_1(t)|k + |A_1(t)|t - |A_1(t)|n + nk - |A_1(t)| k \\
			&|A_1(t)|(n - t) < nk - tk \\
			&|A_1(t)| < k
		\end{aligned}$$

		Then, $E(t) = A_1(t) \cup \textbf{random\_subset}(A_2(t), k - |A_1(t)|)$ by definition.
		$$\begin{aligned}
			|A_2(t)|
			&= n - |A_1(t)| - |A_3(t)| \\
			&> k - |A_1(t)| &\text{since }|A_3(t)| < n - k \\
			&> 0 &\text{since }|A_1(t)| < k
		\end{aligned}$$
		so $|\textbf{random\_subset}(A_2(t), k - |A_1(t)|)| = k - |A_1(t)|$ and $|E(t)| = k$ since $A_1(t) \cap A_2(t) = \emptyset$.
		
		\item By induction, for all $t \in \n$, $|E(t)| = k$.
	\end{itemize}
	\end{proof}
	
	\begin{theorem}
		Any matrix returned by Algorithm~\ref{algo:bms} is a Binary Magic Square.
	\end{theorem}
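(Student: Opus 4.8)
The plan is to assemble the theorem directly from the two lemmas just proved, since together they verify exactly the defining property of a Binary Magic Square. First I would recall that the matrix $M$ returned by Algorithm~\ref{algo:bms} is initialized as $\mathbf{0}_{n \times n}$ and that each coefficient $m_{it}$ is only ever assigned the value $1$ (inside the inner loop, when $i \in E(t)$), and never modified afterwards. Hence $M \in \{0,1\}^{n \times n}$, so it is a genuine binary matrix, and each $m_{it}$ equals $1$ precisely when $i \in E(t)$ and $0$ otherwise.

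Next I would invoke Lemma~\ref{th:row}, which gives that for every $i \in \n$ the sum $\sum_{j=0}^{n-1} m_{ij} = k$, and Lemma~\ref{th:col}, which gives that for every $t \in \n$ the sum $\sum_{i=0}^{n-1} m_{it} = k$. Combining these two facts, all row sums and all column sums of $M$ are equal to the single constant $k$, which is exactly the definition of a BMS whose sum of rows and columns equals $k$. This closes the argument.

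There is essentially no obstacle here: the theorem is a one-line corollary of Lemmas~\ref{th:row} and~\ref{th:col}, and the only point deserving an explicit sentence is the observation that $M$ stays binary throughout the execution (so that the "magic square" property is meaningful), which is immediate from the structure of the algorithm.

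\begin{proof}
    Let $M$ be a matrix returned by Algorithm~\ref{algo:bms}. The matrix is initialized to $\mathbf{0}_{n \times n}$, and the only assignment to its entries is $m_{it} := 1$, performed when $i \in E(t)$; hence every entry of $M$ lies in $\{0,1\}$ and $M \in \{0,1\}^{n \times n}$. By Lemma~\ref{th:row}, for all $i \in \n$ we have $\sum_{j=0}^{n-1} m_{ij} = k$, and by Lemma~\ref{th:col}, for all $j \in \n$ we have $\sum_{i=0}^{n-1} m_{ij} = k$. Therefore all row sums and all column sums of $M$ are equal to $k$, so $M$ is a Binary Magic Square whose sum of rows and columns equals $k$.
\end{proof}
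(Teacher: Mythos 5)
Your proof is correct and follows exactly the same route as the paper's: the theorem is an immediate corollary of Lemmas~\ref{th:row} and~\ref{th:col}. The extra remark that $M$ remains a binary matrix throughout execution is a harmless (and arguably welcome) addition that the paper leaves implicit.
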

	
	\begin{proof}
		In Lemmas \ref{th:row} and \ref{th:col}, we proved that the sum of every row and column of a matrix $M$ generated by Algorithm \autoref{algo:bms} equals $k$. By definition, $M$ is therefore a Binary Magic Square.
	\end{proof}

\subsection{Complexity}

Without any parallelization trick, the overall complexity of Algorithm \autoref{algo:bms} is
\begin{equation}
	C(n) = \mathcal{O} \left( n^2 \right),
\end{equation}
which is the optimal theoretical complexity for writing a $n \times n$ matrix.

However, all operations inside the \textbf{for} loop can be done using vectorized operations in practice. If we have $p$ processes, the overall complexity of algorithm \autoref{algo:bms} then becomes
\begin{equation}
	C(n) = \mathcal{O} \left( n \ceil*{\dfrac{n}{p}}\right).
\end{equation}

\section{Towards non-square Binary Magic Squares}

	\subsection{Characterization}

	One can extend the definition of Binary Magic Squares to non-square matrices, by defining it as a matrix $M = (m_{ij})_{i,j} \in \{0, 1\}^{m \times n}$ such that
	\begin{equation}
		\label{eq:rbms}
		\begin{cases}
			\exists a \in \{0, \dots, n\}, \ \forall i \in \m, \quad \sum_{j=0}^{n-1} m_{ij} = a \\
			\exists b \in \{0, \dots, m\}, \ \forall j \in \n, \quad \sum_{i=0}^{m-1} m_{ij} = b \\
		\end{cases}.
	\end{equation}
	
	However, we show that not all combinations of $a, b, m, n$ can lead to valid magic squares.\\
	
	\begin{theorem}
		\label{th:nsq}
		Let $m, n \in \mathbb{N}$, $a \in \{0, \dots, n\}$ and $b \in \{0, \dots, n\}$. There exists a valid non-square BMS $M \in \{0, 1\}^{m \times n}$ s.t. the sum of every row (resp. column) equals $a$ (resp. $b$) iff the ratio between $b$ and $a$ equals the ratio between $m$ and $n$, i.e., exists $q, q', m', n' \in \mathbb{N}$ such that $m = qm'$, $n = qn'$, $b = q'm'$ and $a = q'n'$.
	\end{theorem}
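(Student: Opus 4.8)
The plan is to prove the equivalence in two directions: the ``only if'' part by a double-counting argument followed by an elementary $\gcd$ manipulation, and the ``if'' part by an explicit construction that bootstraps the square case (Theorem~\ref{th:existence}). The key preliminary observation I would record is that the existential factorization in the statement is simply a repackaging of the single equation $ma = nb$: indeed, any factorization of the stated form gives $ma = q q' m' n' = nb$, and conversely $ma = nb$ yields such a factorization (see below). So the theorem amounts to: a valid non-square BMS with margins $a$ and $b$ exists iff $ma = nb$.

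For the ``only if'' direction, suppose $M \in \{0,1\}^{m \times n}$ is a valid non-square BMS with all row sums equal to $a$ and all column sums equal to $b$. Counting the total number of ones in $M$ first by rows and then by columns gives $ma = \sum_{i} \sum_{j} m_{ij} = nb$, hence $ma = nb$. To extract the factorization, set $q := \gcd(m,n)$, write $m = qm'$ and $n = qn'$ with $\gcd(m',n') = 1$, so that $m'a = n'b$; since $\gcd(m',n') = 1$ this forces $n' \mid a$, and writing $a = q'n'$ we get $b = q'm'$. Thus $m = qm'$, $n = qn'$, $a = q'n'$, $b = q'm'$, as required. The fully degenerate situations (an empty matrix when $m=0$ or $n=0$, or the all-zero matrix when $a=0$ or $b=0$) correspond to $q'=0$ or $q=0$ and should just be checked by hand.

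For the ``if'' direction, assume we are given $m = qm'$, $n = qn'$, $a = q'n'$, $b = q'm'$. Together with the standing hypothesis $a \le n$ this forces $q' \le q$ (when $n' \ge 1$; the case $n' = 0$ is again the empty matrix), so $q' \in \{0, \dots, q\}$ and Theorem~\ref{th:existence} applies with parameters $q$ and $q'$: it produces a square BMS $C = (c_{\alpha\beta}) \in \{0,1\}^{q \times q}$ all of whose row and column sums equal $q'$. Now define $M \in \{0,1\}^{m \times n}$ as the block matrix obtained from $C$ by replacing each entry $c_{\alpha\beta}$ with the $m' \times n'$ block all of whose entries equal $c_{\alpha\beta}$ (equivalently $M = C \otimes J$ where $J$ is the $m' \times n'$ all-ones matrix). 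Then $M$ has size $(qm') \times (qn') = m \times n$, entries in $\{0,1\}$, every row sums to $\left(\sum_\beta c_{\alpha\beta}\right) n' = q' n' = a$, and every column sums to $\left(\sum_\alpha c_{\alpha\beta}\right) m' = q' m' = b$, so $M$ is a valid non-square BMS with the prescribed margins.

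All of the individual steps are short, so the only point needing genuine care is the bookkeeping in the ``only if'' direction: recognizing that the awkward-looking existential statement is exactly the condition $ma = nb$, carrying out the $\gcd$ reduction cleanly, and making sure the boundary cases ($a = 0$, $b = 0$, $m = 0$, $n = 0$, or $q' = 0$) are not silently excluded by the use of $\gcd$. The construction in the ``if'' direction is routine once one thinks of tiling a square BMS with constant blocks, and its correctness is a one-line computation.
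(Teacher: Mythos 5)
Your proposal is correct and follows essentially the same route as the paper: double counting to get $ma = nb$, the $\gcd$ reduction with Euclid's lemma to extract the factorization, and tiling a $q \times q$ BMS from Theorem~\ref{th:existence} (your $C \otimes J$ and the paper's ``repeat $M$ $m'$ times vertically and $n'$ times horizontally'' are the same construction up to a permutation of rows and columns, with identical margin computations). Your explicit remark that the existential condition is just $ma = nb$, and your attention to the degenerate cases and to why $q' \le q$, are slightly more careful than the paper's write-up but do not change the argument.
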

	
	\begin{proof}
		Let $M = (m_{ij})_{i,j} \in \{0, 1\}^{m \times n}$ be such a matrix. By definition,
		\begin{equation}
			\sum_{i=0}^{m-1} \sum_{j=0}^{n-1} m_{ij} = \sum_{j=0}^{n-1} \sum_{i=0}^{m-1} m_{ij},
		\end{equation}
		which leads to
		\begin{equation}
			\label{eq:gauss}
			am = bn.
		\end{equation}
		
		Let $q = \gcd(m, n)$. There exists $m', n' \in \mathbb{N}$ such that $m = qm'$ and $n = qn'$, with $m' \wedge n' = 1$.
		
		Unless $a = b = 0$, follows that $m' | bn'$ and $n' | am'$, hence $m' | b'$ and $n' | a$ by Euclid's lemma.
		
		In other words, exists $q_a, q_b \in \mathbb{N}$ such that $a = q_a n'$ and $b = q_b m'$.
		
		Equation \eqref{eq:gauss} can be rewritten $q_a n' q m' = q_b m' q n'$, which ultimately leads to $q_a = q_b = q'$. Given $m, n \in \mathbb{N}$, the set of pairs $(a, b)$ that can lead to valid BMS is therefore
		\begin{equation}
			\{(q'n', q'm') \ | \ q = \gcd(m, n), m = qm', n = qn', q' \in \{0, \dots, q \} \}.
		\end{equation}

		We now demonstrate that under these conditions on $a, b$, there always exist a valid BMS.
		
		Let $M = (m_{ij})_{i,j} \in \{0, 1\}^{q \times q}$ be a BMS whose sum of rows and columns equals $q'$ (there exists at least one according to Theorem~\ref{th:existence}).
		
		Then, we construct $M' \in \{0, 1\}^{m \times n}$ by repeating $M$ $m'$ times vertically and $n'$ times horizontally. The sum of each row $i$ of $M$ equals $n' \sum_j m_{ij} = n'q' = a$. Similarly, the sum of each column $j$ equals $m' \sum_i m_ij = m' q' = b$, which concludes the proof.
	\end{proof}
	
	\begin{remark}
		The condition on $\gcd(m, n)$ implies, in particular, that if $m \wedge n = 1$, then the only valid BMS are the trivial ones ($a = b = 0$ or $a = n$ and $b = m$).
	\end{remark}
		
	\subsection{Algorithm}
	
	The algorithm and proof of its correctness are very similar to those detailed above for square BMS. We include both here for completeness.
	
	\begin{algorithm}
		\label{algo:bms_nsq}
		\caption{Non-square Binary Magic Square generation}
		\SetKwInOut{Input}{input}
		\SetKwInOut{Output}{output}
		\Input{Two integers $m, n \in \mathbb{N}$, two integers $a \in \{0, \dots, n\}$ and $b \in \{0, \dots, m\}$ satisfying the conditions of Theorem~\ref{th:nsq}.}
		\Output{A Binary Magic Square $M \in \{0, 1\}^{m \times n}$ whose sum of rows and columns equals $k$.}
		$M = (m_{ij})_{i,j} = \mathbf{0}_{m \times n}$\;
		\For{$i \in \m$}{$s_i = 0$}
		\For{$t=0$ \KwTo $n-1$}{
			$A_1 := \{ i \in \m \,|\, s_i = a + t - n \}$\;
			$A_2 := \{ i \in \m \,|\, a + t - n < s_i < a \}$\;
			$A_3 := \{ i \in \m \,|\, s_i = a \}$\;
			$E := A_1 \cup \textbf{random\_subset}(A_2, b - |A_1|)$\;
			\For{$i \in E$}{
				$m_{it} := 1$\;
				$s_i := s_i + 1$\;
			}
			\Return{M}
		}
	\end{algorithm}

	\begin{lemma}
		\label{th:row_nsq}
		The sum of each row of a matrix generated by Algorithm~\ref{algo:bms_nsq} equals $a$.
	\end{lemma}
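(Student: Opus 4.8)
The plan is to transcribe the proof of Lemma~\ref{th:row} almost verbatim, replacing the row constraint~\eqref{eq:constraint} by its non-square analogue and $k$ by $a$. Concretely, I would show by induction on $t$ that
\begin{equation}
	\forall i \in \m, \ \forall t \in \{0, \dots, n\}, \quad a + t - n \leq s_i(t) \leq a.
\end{equation}
As in the square case one first notes that at the end of iteration $t$ we have $s_i = \sum_{j=0}^{t} m_{it}$, so establishing this bound at $t = n$ yields $s_i(n) = a$, which is exactly the claim. For the base case $t = 0$ we have $s_i(0) = 0$ for every $i \in \m$, and since $a \in \{0, \dots, n\}$ we have $a - n \leq 0 \leq a$, so the bound holds at $t = 0$.

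For the inductive step, assume the bound holds at some $t \in \{0, \dots, n-1\}$; then $a + t - n \leq s_i(t) \leq a$ forces $(A_1(t), A_2(t), A_3(t))$ to be a partition of $\m$, and I would split into the same three cases. If $i \in A_1(t)$, then $s_i(t) = a + t - n$ and $i \in E(t)$ because $A_1(t) \subseteq E(t)$, so $s_i(t+1) = a + t - n + 1$. If $i \in A_2(t)$, then $a + t - n + 1 \leq s_i(t) \leq a - 1$, and since $s_i$ increases by at most one per step (the one-step bound~\eqref{eq:one} holds for Algorithm~\ref{algo:bms_nsq} by the same argument), we get $a + t - n + 1 \leq s_i(t+1) \leq a$. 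If $i \in A_3(t)$, then $s_i(t) = a$ and $i \notin E(t)$, so $s_i(t+1) = a$. In every case $a + (t+1) - n \leq s_i(t+1) \leq a$, closing the induction; evaluating at $t = n$ gives $a \leq s_i(n) \leq a$, i.e., every row sum equals $a$.

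The only step needing a word of care is the assertion $i \notin E(t)$ for $i \in A_3(t)$: it relies solely on $\textbf{random\_subset}(A_2(t), b - |A_1(t)|)$ returning a subset of $A_2(t)$, so that $E(t) \subseteq A_1(t) \cup A_2(t)$, which is disjoint from $A_3(t)$ — no claim about $|E(t)|$ is required here. Hence I expect no genuine obstacle in this lemma; it is a routine relabelling of Lemma~\ref{th:row}. The substantively new feature of the non-square setting — that exactly $b$ indices can in fact be selected at each step, which is what makes the column sums equal $b$ and where the hypothesis $am = bn$ (Theorem~\ref{th:nsq}) enters — is the concern of the companion column lemma and plays no role in the present argument.
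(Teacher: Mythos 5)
Your proposal is correct and follows essentially the same route as the paper: induction on $t$ establishing $a + t - n \leq s_i(t) \leq a$, with the same three-case split over $A_1(t)$, $A_2(t)$, $A_3(t)$ and the same observation that $E(t) \subseteq A_1(t) \cup A_2(t)$ handles the $A_3$ case without any claim on $|E(t)|$. No substantive differences to report.
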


	\begin{proof}
		Similarly to the square case, here we show that for all $i \in \{0, \dots, m-1\}$, one has
		\begin{equation}
			\label{eq:constraint_nsq}
			a + t - n \leq s_i(t) \leq a.
		\end{equation}
		We show by induction that Equation \eqref{eq:constraint_nsq} is verified at each iteration $t$.

		\begin{itemize}
			\item $\forall i \in \m, s_i(0) = 0$ so equation \eqref{eq:constraint_nsq} holds for $t = 0$.
			
			\item Assume equation \eqref{eq:constraint} holds for one $0 \leq t < n - 1$. We show that it holds as well for $t+1$.
			
			By hypothesis, for all $i \in \m$, $a + t - n \leq s_i(t) \leq a$, so $(A_1(t), A_2(t), A_3(t))$ is a partition of $\m$.
			
			Then, for all $i \in \m$,
			\begin{itemize}
				\item $i \in A_1(t)$
				$$\begin{aligned}
					i \in A_1(t)
					&\Rightarrow s_i(t) = a + t - n \ \text{and} \ i \in E(t) \\
					&\Rightarrow s_i(t+1) = s_i(t) + 1 = a + t - n + 1
				\end{aligned}$$
				\item $i \in A_2(t)$
				$$\begin{aligned}
					i \in A_2(t)
					&\Rightarrow a + t - n + 1 \leq s_i(t) \leq a - 1 \\
					&\Rightarrow a + t - n + 1 \leq s_i(t + 1) \leq a &\text{by equation \eqref{eq:one}}
				\end{aligned}$$
				\item $i \in A_3(t)$
				$$\begin{aligned}
					i \in A_3(t)
					&\Rightarrow s_i(t) = a \ \text{and} \ i \notin E(t) \\
					&\Rightarrow s_i(t+1) = s_i(t) = a
				\end{aligned}$$
			\end{itemize}
			So for all $i \in \n$, $a + t - n + 1 \leq s_i(t+1) \leq a$.
			\item By induction,
			\begin{equation}
				\forall t \in \n, \forall i \in \m, \ a + t - n \leq s_i(t) \leq a.
			\end{equation}
		\end{itemize}
		
		In particular, for all $i \in \m$, $s_i(n) = a$, i.e. the sum of each line equals $a$.
	\end{proof}

	\begin{lemma}
		\label{th:col_nsq}
		The sum of each column of a matrix generated by Algorithm~\ref{algo:bms_nsq} equals $b$.
	\end{lemma}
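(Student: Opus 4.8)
The plan is to mirror the proof of Lemma~\ref{th:col} almost verbatim, with $(n,n,k,k)$ replaced by $(m,n,a,b)$: show by induction on $t$ that $|E(t)| = b$ for every $t \in \n$. Since $m_{it} = 1$ exactly when $i \in E(t)$, this gives $\sum_{i=0}^{m-1} m_{it} = |E(t)| = b$ for every column, which is the claim. The only genuinely new ingredient compared to the square case is that the cancellations which were automatic when $m = n$ (where ``$kn = kn$'') now rely on the identity $am = bn$; but this is precisely Equation~\eqref{eq:gauss}, which holds here by the standing hypothesis that $a,b,m,n$ satisfy the conditions of Theorem~\ref{th:nsq} (indeed $am = q'n' \cdot qm' = q'm' \cdot qn' = bn$), so no extra assumption is needed.

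For the base case $t = 0$ I would distinguish, as in Lemma~\ref{th:col}, whether $a = 0$, $a = n$, or $0 < a < n$. If $a = 0$ the conditions of Theorem~\ref{th:nsq} force $b = 0$, and $A_1(0) = A_2(0) = \emptyset$, so $|E(0)| = 0 = b$. If $a = n$ they force $b = m$, and $A_1(0) = \m$, so $|E(0)| = m = b$. If $0 < a < n$ then $A_1(0) = \emptyset$ and $A_2(0) = \m$; since $b \in \{0, \dots, m\}$, the call $\textbf{random\_subset}(A_2(0), b)$ returns a set of size $b$, hence $|E(0)| = b$.

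For the inductive step, assume $|E(j)| = b$ for all $j < t$. Then the number of ones in $M$ at step $t$ equals $\sum_{j<t} |E(j)| = tb$ counted column-wise, and $\sum_{i=0}^{m-1} s_i(t)$ counted row-wise. By Lemma~\ref{th:row_nsq}, Equation~\eqref{eq:constraint_nsq} holds at step $t$, so $(A_1(t), A_2(t), A_3(t))$ partitions $\m$, and splitting the row-wise count over this partition yields $tb = |A_1(t)|(a + t - n) + \sum_{i \in A_2(t)} s_i(t) + |A_3(t)|\,a$. Flanking the middle sum by $|A_2(t)|(a+t-n)$ and $|A_2(t)|\,a$, using $|A_1(t)| + |A_2(t)| + |A_3(t)| = m$, and simplifying with $am = bn$ and $t < n$, I expect to obtain $|A_1(t)| < b$ and $|A_3(t)| < m - b$, hence $|A_2(t)| = m - |A_1(t)| - |A_3(t)| > b - |A_1(t)| \geq 0$. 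Consequently $\textbf{random\_subset}(A_2(t), b - |A_1(t)|)$ has size exactly $b - |A_1(t)|$ and, since $A_1(t) \cap A_2(t) = \emptyset$, $|E(t)| = b$, which closes the induction.

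The computations are routine, so there is no real obstacle; the only points requiring care are (i) invoking the correct consequences of Theorem~\ref{th:nsq}'s hypotheses in the base case ($a = 0 \Rightarrow b = 0$, $a = n \Rightarrow b = m$) and using $am = bn$ at exactly the place where the square proof used $kn = kn$, and (ii) the degenerate sub-case $A_2(t) = \emptyset$, where the flanking inequalities are no longer strict; there one checks directly from $tb = |A_1(t)|(a+t-n) + |A_3(t)|\,a$ together with $am = bn$ and $t < n$ that $|A_1(t)| = b$, so $E(t) = A_1(t)$ and again $|E(t)| = b$.
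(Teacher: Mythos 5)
Your proof is correct and follows essentially the same route as the paper's: the same base-case split, the same double-counting identity $tb = |A_1(t)|(a+t-n) + \sum_{i\in A_2(t)} s_i(t) + |A_3(t)|\,a$, and the same flanking argument combined with $am = bn$ to get $|A_1(t)| < b$ and $|A_3(t)| < m-b$. In fact your explicit treatment of the degenerate sub-case $A_2(t) = \emptyset$ (where the strict inequalities \eqref{eq:flank2} break down and one instead solves $tb = |A_1(t)|(a+t-n) + (m - |A_1(t)|)a$ together with $am = bn$ and $t < n$ to get $|A_1(t)| = b$ directly) patches a small gap that the paper's own proof leaves open, since that case does occur in actual runs of the algorithm.
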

	
	\begin{proof}
		For each row $i$ and column $t$, $m_{it} = 1$ if $i \in E(t)$ and 0 otherwise, by design. Therefore, for all $t$,
		\begin{equation}
			\sum_{i=0}^{m-1} m_{it} = |E(t)|.
		\end{equation}
		
		We show by induction that for all $t \in \n$, $|E(t)| = b$.
		\begin{itemize}
			\item Assume $t = 0$.
			\begin{itemize}
				\item If $b = 0$, then $a = 0$.
				
				Consequently, $A_1(0) = A_2(0) = \emptyset$ so $E(0) = \emptyset$ and $|E(0)| = 0 = b$.
				
				\item If $b = m$, then $a = n$, and $a + t - n = 0$, i.e. $A_1(0) = \m$ and $A_2(0) = A_3(0) = \emptyset$.
				
				Then $E(0) = A_1(0) = \m$ and $|E(0)| = m = b$.
				\item If $0 < b < m$, then $0 < a < n$.
				
				Consequently, $A_1(0) = \emptyset$ and $A_2(0) = \m$.
				
				Then, $E(0)$ is a subset of $A_2(0)$ of size $b - |A_1(0)|$, i.e. a subset of $\m$ of size $b$. So $|E(0)| = b$.
			\end{itemize}
			
			\item Assume that exists $0 < t \leq n-1$ such that for all $0 \leq j < t$, $|E(j)| = b$. We show that $|E(t)| = b$ as well.
			
			At time step $t$, the total number of ones in $M$ is then $\sum_{j=0}^{t-1} |E(j)| = tb$ (i.e. the sum of the ones in each column).
			
			Also, the total number of ones in $M$ can be expressed as the sum of the number of ones in each line, i.e. $\sum_{i=0}^{n-1} s_i(t)$.
			
			As $A_1(t)$, $A_2(t)$ and $A_3(t)$ partition $\m$,  we have therefore
			\begin{equation}
				\label{eq:linecol2}
				\begin{aligned}
					tb
					&= \sum_{i \in A_1(t)} s_i(t) + \sum_{i \in A_2(t)} s_i(t) + \sum_{i \in A_3(t)} s_i(t) \\
					&= |A_1(t)|(a + t - n) + \sum_{i \in A_2(t)} s_i(t) + |A_3(t)| a
					&\text{by definition of } A_1 \text{ and } A_3
				\end{aligned}
			\end{equation}
			
			By definition of $A_2$,
			\begin{equation}
				\label{eq:flank2}
				|A_2(t)| (a + t - n) < \sum_{i \in A_2(t)} s_i(t) < |A_2(t)| a.
			\end{equation}
			
			So, by injecting \eqref{eq:flank2} into \eqref{eq:linecol2},
			\begin{equation}
				(|A_1(t)| + |A_2(t)|)(a + t - n) + |A_3(t)| a < tb < |A_1(t)|(a + t - n) + (|A_2(t)| + |A_3(t)|) a
			\end{equation}
			
			As $A_1(t)$, $A_2(t)$ and $A_3(t)$ partition $\m$,
			\begin{equation}
				|A_1(t)| + |A_2(t)| + |A_3(t)| = m.
			\end{equation}
			Therefore
			$$\begin{aligned}
				&(m - |A_3(t)|)(a + t - n) + |A_3(t)| a < tb \\
				&am + mt - mn - |A_3(t)|a - |A_3(t)|t + |A_3(t)|n + |A_3(t)|a < tb \\
				&|A_3(t)|(n - t) < mn - am - mt + tb \\
				&|A_3(t)|(n - t) < mn - bn - mt + tb &\text{ by \eqref{eq:gauss}} \\
				&|A_3(t)|(n - t) < (m - b)(n - t) \\
				&|A_3(t)| < m - b
			\end{aligned}$$
			and
			$$\begin{aligned}
				&tb < |A_1(t)|(a + t - n) + (m - |A_1(t)|) a \\
				&tb < |A_1(t)|a + |A_1(t)|t - |A_1(t)|n + am - |A_1(t)| a \\
				&|A_1(t)|(n - t) < am - tb \\
				&|A_1(t)|(n - t) < bn - tb &\text{ by \eqref{eq:gauss}} \\
				&|A_1(t)| < b
			\end{aligned}$$
			
			Then, $E(t) = A_1(t) \cup \textbf{random\_subset}(A_2(t), b - |A_1(t)|)$ by definition.
			$$\begin{aligned}
				|A_2(t)|
				&= m - |A_1(t)| - |A_3(t)| \\
				&> b - |A_1(t)| &\text{since }|A_3(t)| < m - b \\
				&> 0 &\text{since }|A_1(t)| < b
			\end{aligned}$$
			so $|\textbf{random\_subset}(A_2(t), b - |A_1(t)|)| = b - |A_1(t)|$ and $|E(t)| = b$ since $A_1(t) \cap A_2(t) = \emptyset$.
			
			\item By induction, for all $t \in \n$, $|E(t)| = b$.
		\end{itemize}
	\end{proof}

	\begin{theorem}
		Any matrix returned by Algorithm~\ref{algo:bms_nsq} is a non-square Binary Magic Square.
	\end{theorem}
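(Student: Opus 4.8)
The plan is to mirror exactly the proof of the square-case theorem: the statement is an immediate corollary of Lemmas~\ref{th:row_nsq} and~\ref{th:col_nsq}, so essentially no new work is required beyond checking that the quantities produced fit the defining conditions~\eqref{eq:rbms}.

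First I would recall that, by Lemma~\ref{th:row_nsq}, every row of a matrix $M$ returned by Algorithm~\ref{algo:bms_nsq} sums to $a$, and by Lemma~\ref{th:col_nsq}, every column sums to $b$. This is precisely the form of the two conditions in~\eqref{eq:rbms}, with the existentially quantified constants instantiated to $a$ and $b$. Since the inputs are assumed to satisfy $a \in \{0,\dots,n\}$ and $b \in \{0,\dots,m\}$, both constants lie in the required ranges, so $M$ meets the definition of a (non-square) Binary Magic Square.

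Finally I would note for completeness that the hypotheses of Theorem~\ref{th:nsq} (the ratio condition $am = bn$, equivalently $a = q'n'$, $b = q'm'$ with $q = \gcd(m,n)$) are what guarantee such a matrix exists at all; within the proof of the present theorem they are simply part of the standing input assumptions and need not be re-derived, since the two lemmas already establish the output's validity directly from the algorithm's behaviour.

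There is no real obstacle here: the entire content has been pushed into the two preceding lemmas, and the only thing to verify is that "sum of rows $=a$, sum of columns $=b$, with $a,b$ in range" is the same as condition~\eqref{eq:rbms}. The one thing to be slightly careful about is not to conflate this with the \emph{existence} claim of Theorem~\ref{th:nsq}: the theorem as stated only asserts correctness of the algorithm's output, so the proof should stay at that level and not attempt to re-prove existence.
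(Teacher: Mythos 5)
Your proof is correct and follows exactly the same route as the paper: the theorem is an immediate corollary of Lemmas~\ref{th:row_nsq} and~\ref{th:col_nsq}, which give row sums equal to $a$ and column sums equal to $b$, matching the definition in~\eqref{eq:rbms}. Your added remarks distinguishing this correctness claim from the existence content of Theorem~\ref{th:nsq} are accurate but not needed.
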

	
	\begin{proof}
		In Lemmas \ref{th:row_nsq} and \ref{th:col_nsq}, we proved that the sum of every row (resp. column) of a matrix $M$ generated by Algorithm \ref{algo:bms_nsq} equals $a$ (resp. $b$). By definition, $M$ is therefore a Binary Magic Square.
	\end{proof}

\section{Conclusion}

We introduced an efficient algorithm and implementation for generating random Binary Magic Squares, and showed that it can easily be generalized to non-square matrices. Binary matrices with constraints on the sums of rows and columns have been widely studied, however simple algorithms with open implementations targeting the specific case of Binary Magic Squares were, to the best of our knowledge, lacking. Here, we therefore propose such an algorithm, theoretically validate its correctness, and publicly release a Python implementation. In particular, we also propose a PyTorch version of our algorithm which can generate many BMS in parallel on a GPU.

\bibliographystyle{ieeetr}
\bibliography{library}

\end{document}